%%%%%%%%%%%%%%%%%%%%%%% file typeinst.tex %%%%%%%%%%%%%%%%%%%%%%%%%
%
% This is the LaTeX source for the instructions to authors using
% the LaTeX document class 'llncs.cls' for contributions to
% the Lecture Notes in Computer Sciences series.
% http://www.springer.com/lncs       Springer Heidelberg 2006/05/04
%
% It may be used as a template for your own input - copy it
% to a new file with a new name and use it as the basis
% for your article.
%
% NB: the document class 'llncs' has its own and detailed documentation, see
% ftp://ftp.springer.de/data/pubftp/pub/tex/latex/llncs/latex2e/llncsdoc.pdf
%
%%%%%%%%%%%%%%%%%%%%%%%%%%%%%%%%%%%%%%%%%%%%%%%%%%%%%%%%%%%%%%%%%%%

\documentclass[runningheads,a4paper]{llncs}

\usepackage{amssymb}
\setcounter{tocdepth}{3}
\usepackage{graphicx}
\usepackage{amsmath}

\newtheorem{thm}{Theorem}

\newcommand{\comments}[1]{}
\comments{Due to double-blind reviewing, there are a few comments that should be uncommented for the final version.}

\usepackage{url}
\urldef{\mailsa}\path|jose.m.pena@liu.se|  
\newcommand{\keywords}[1]{\par\addvspace\baselineskip
\noindent\keywordname\enspace\ignorespaces#1}

\begin{document}

\mainmatter  % start of an individual contribution

% first the title is needed
\title{Approximate Counting of Graphical Models Via MCMC Revisited}

% a short form should be given in case it is too long for the running head
\titlerunning{Approximate Counting of Graphical Models Via MCMC Revisited}

% the name(s) of the author(s) follow(s) next
%
% NB: Chinese authors should write their first names(s) in front of
% their surnames. This ensures that the names appear correctly in
% the running heads and the author index.
%
\author{Jose M. Pe\~{n}a}
\authorrunning{Jose M. Pe\~{n}a}
% (feature abused for this document to repeat the title also on left hand pages)

% the affiliations are given next; don't give your e-mail address
% unless you accept that it will be published
\institute{ADIT, IDA, Link\"oping University, Sweden\\
\mailsa\\
}

%
% NB: a more complex sample for affiliations and the mapping to the
% corresponding authors can be found in the file "llncs.dem"
% (search for the string "\mainmatter" where a contribution starts).
% "llncs.dem" accompanies the document class "llncs.cls".
%

\toctitle{}
\tocauthor{}
\maketitle

\begin{abstract}
In \cite{Penna2007}, MCMC sampling is applied to approximately calculate the ratio of essential graphs (EGs) to directed acyclic graphs (DAGs) for up to 20 nodes. In the present paper, we extend that work from 20 to 31 nodes. We also extend that work by computing the approximate ratio of connected EGs to connected DAGs, of connected EGs to EGs, and of connected DAGs to DAGs. Furthermore, we prove that the latter ratio is asymptotically 1. We also discuss the implications of these results for learning DAGs from data.
\keywords{Bayesian networks, Markov equivalence, MCMC}
\end{abstract}

\section{Introduction}

Probably the most common approach to learning directed acyclic graph
(DAG) models\footnote{All the graphs considered in this paper are labeled graphs.} from data, also known as Bayesian network models, is that of
performing a search in the space of either DAGs or DAG models. In
the latter case, DAG models are typically represented as essential
graphs (EGs). Knowing the ratio of EGs to DAGs for a given number of nodes is a valuable
piece of information when deciding which space to search. For
instance, if the ratio is low, then one may prefer to search the
space of EGs rather than the space of DAGs, though the latter
is usually considered easier to traverse. Unfortunately, while the
number of DAGs can be computed without
enumerating them all \cite[Equation 8]{Robinson1977}, the only method for counting EGs that we are aware of is enumeration. Specifically, Gillispie and Perlman enumerated all the EGs for up
to 10 nodes by means of a computer
program \cite{GillispieandPerlman2002}. They showed that the ratio is around 0.27 for 7-10 nodes. They also conjectured a similar ratio for more than 10 nodes by
extrapolating the exact ratios for up to 10 nodes.

Enumerating EGs for more than 10 nodes seems challenging: To enumerate all the EGs over 10 nodes, the
computer program of \cite{GillispieandPerlman2002} needed 2253 hours
in a "mid-1990s-era, midrange minicomputer". We obviously prefer to know the exact ratio of EGs to DAGs
for a given number of nodes rather than an approximation to it.
However, an approximate ratio may be easier to obtain and serve as
well as the exact one to decide which space to search. In \cite{Penna2007}, a Markov chain Monte Carlo (MCMC) approach was proposed to
approximately calculate the ratio while avoiding enumerating EGs.
This approach consisted of the following steps. First, the author constructed a
Markov chain (MC) whose stationary distribution was uniform over the
space of EGs for the given number of nodes. Then, the author sampled that
stationary distribution and computed the ratio $R$ of essential DAGs
(EDAGs) to EGs in the sample. Finally, the author transformed this approximate
ratio into the desired approximate ratio of EGs to DAGs as follows: Since $\frac{\#EGs}{\#DAGs}$ can be expressed as $\frac{\#EDAGs}{\#DAGs}
\frac{\#EGs}{\#EDAGs}$,\footnote{We use the symbol $\#$ followed by a class of graphs to denote the cardinality of the class.} then we can approximate it by
$\frac{\#EDAGs}{\#DAGs} \frac{1}{R}$ where $\#DAGs$ and $\#EDAGs$
can be computed via \cite[Equation 8]{Robinson1977} and \cite[p. 270]{Steinsky2003}, respectively. The author reported
the so-obtained approximate ratio for up to 20 nodes. The
approximate ratios agreed well with the exact ones available in the
literature and suggested that the exact ratios are not very low (the approximate ratios were 0.26-0.27 for 7-20 nodes). This indicates that one should not expect more than a moderate gain in efficiency when searching the space of EGs instead of the space of DAGs. Of course, this is a bit of a bold claim since the gain is dictated by the average ratio over the EGs visited during the search and not by the average ratio over all the EGs in the search space. For instance, the gain is not the same if we visit the empty EG, whose ratio is 1, or the complete EG, whose ratio is $1/n!$ for $n$ nodes. Unfortunately, it is impossible to know beforehand which EGs will be visited during the search. Therefore, the best we can do is to draw (bold) conclusions based on the average ratio over all the EGs in the search space.

In this paper, we extend the work in \cite{Penna2007} from 20 to 31 nodes. We also extend that work by reporting some new approximate ratios. Specifically, we report the approximate ratio of connected EGs (CEGs) to connected DAGs (CDAGs), of CEGs to EGs, and of CDAGs to DAGs. We elaborate later on why these ratios are of interest. The approximate ratio of CEGs to CDAGs is computed from the sample as follows. First, we compute the ratio $R'$ of EDAGs to CEGs in the sample. Second, we transform this approximate
ratio into the desired approximate ratio of CEGs to CDAGs as follows: Since $\frac{\#CEGs}{\#CDAGs}$ can be expressed as $\frac{\#EDAGs}{\#CDAGs}
\frac{\#CEGs}{\#EDAGs}$, then we can approximate it by
$\frac{\#EDAGs}{\#CDAGs} \frac{1}{R'}$ where $\#EDAGs$ can be computed by \cite[p. 270]{Steinsky2003} and $\#CDAGs$ can be computed as shown in Appendix A. The approximate ratio of CEGs to EGs is computed directly from the sample. The approximate ratio of CDAGs to DAGs is computed with the help of Appendix A and \cite[Equation 8]{Robinson1977}.

The computer program implementing the MCMC approach described above is essentially the same as in \cite{Penna2007} (it has only been modified to report whether the EGs sampled are connected or not).\footnote{The modified program will be made available after publication.}\comments{\footnote{The original and the modified programs are available at www.ida.liu.se/$\sim$jospe.}} The program is written in C++ and compiled in Microsoft Visual C++ 2010 Express. The experiments are run on an AMD Athlon 64 X2 Dual Core Processor 5000+ 2.6 GHz, 4 GB RAM and Windows Vista Business. The compiler and the computer used in \cite{Penna2007} were Microsoft Visual C++ 2008 Express and a Pentium 2.4 GHz, 512 MB RAM and Windows 2000. The experimental settings is the same as before for up to 30 nodes, i.e. each approximate ratio reported is based on a sample of $10^4$ EGs, each obtained as the state of the MC after performing $10^6$ transitions with the empty EG as initial state. For 31 nodes though, each EG sampled is obtained as the state of the MC after performing $2 \times 10^6$ transitions with the empty EG as initial state. We elaborate later on why we double the length of the MCs for 31 nodes.

The rest of the paper is organized as follows. In Section \ref{sec:egs}, we extend the work in \cite{Penna2007} from 20 to 31 nodes. In Section \ref{sec:cegs}, we extend the work in \cite{Penna2007} with new approximate ratios. In Section \ref{sec:discussion}, we recall our findings and discuss future work. The paper ends with two appendices devoted to technical details.

\section{Extension from 20 to 31 Nodes}\label{sec:egs}

\begin{table}[t]
\caption{Exact and approximate $\frac{\#EGs}{\#DAGs}$ and $\frac{\#EDAGs}{\#EGs}$.}\label{tab:ratio} \scriptsize
\begin{center}
\renewcommand{\arraystretch}{1.2}
\begin{tabular}{c|rrr|rrr|rrr|}
\multicolumn{1}{c}{\bf NODES} & \multicolumn{3}{c}{\bf EXACT} & \multicolumn{3}{c}{\bf OLD APPROXIMATE} & \multicolumn{3}{c}{\bf NEW APPROXIMATE}\\
& $\frac{\#EGs}{\#DAGs}$ & $\frac{\#EDAGs}{\#EGs}$ & Hours & $\frac{\#EGs}{\#DAGs}$ & $\frac{\#EDAGs}{\#EGs}$ & Hours & $\frac{\#EGs}{\#DAGs}$ & $\frac{\#EDAGs}{\#EGs}$ & Hours\\
\hline 2 & 0.66667 & 0.50000 & 0.0 & 0.66007 & 0.50500 & 3.5 & 0.67654 & 0.49270 & 1.3\\
\hline 3 & 0.44000 & 0.36364 & 0.0 & 0.43704 & 0.36610 & 5.2 & 0.44705 & 0.35790 & 1.0\\
\hline 4 & 0.34070 & 0.31892 & 0.0 & 0.33913 & 0.32040 & 6.8 & 0.33671 & 0.32270 & 1.2\\
\hline 5 & 0.29992 & 0.29788 & 0.0 & 0.30132 & 0.29650 & 8.0 & 0.29544 & 0.30240 & 1.4\\
\hline 6 & 0.28238 & 0.28667 & 0.0 & 0.28118 & 0.28790 & 9.4 & 0.28206 & 0.28700 & 1.6\\
\hline 7 & 0.27443 & 0.28068 & 0.0 & 0.27228 & 0.28290 & 12.4 & 0.27777 & 0.27730 & 2.0\\
\hline 8 & 0.27068 & 0.27754 & 0.0 & 0.26984 & 0.27840 & 13.8 & 0.26677 & 0.28160 & 2.3\\
\hline 9 & 0.26888 & 0.27590 & 7.0 & 0.27124 & 0.27350 & 16.5 & 0.27124 & 0.27350 & 2.6\\
\hline 10 & 0.26799 & 0.27507 & 2253.0 & 0.26690 & 0.27620 & 18.8 & 0.26412 & 0.27910 & 3.1\\
\hline 11 & & & & 0.26179 & 0.28070 & 20.4 & 0.26179 & 0.28070 & 3.8\\
\hline 12 & & & & 0.26737 & 0.27440 & 21.9 & 0.26825 & 0.27350 & 4.2\\
\hline 13 & & & & 0.26098 & 0.28090 & 23.3 & 0.27405 & 0.26750 & 4.5\\
\hline 14 & & & & 0.26560 & 0.27590 & 25.3 & 0.27161 & 0.26980 & 5.1\\
\hline 15 & & & & 0.27125 & 0.27010 & 25.6 & 0.26250 & 0.27910 & 5.7\\
\hline 16 & & & & 0.25777 & 0.28420 & 27.3 & 0.26943 & 0.27190 & 6.7\\
\hline 17 & & & & 0.26667 & 0.27470 & 29.9 & 0.26942 & 0.27190 & 7.6\\
\hline 18 & & & & 0.25893 & 0.28290 & 37.4 & 0.27040 & 0.27090 & 8.2\\
\hline 19 & & & & 0.26901 & 0.27230 & 38.1 & 0.27130 & 0.27000 & 9.0\\
\hline 20 & & & & 0.27120 & 0.27010 & 40.3 & 0.26734 & 0.27400 & 9.9\\
\hline 21 & & & & & & & 0.26463 & 0.27680 & 17.4\\
\hline 22 & & & & & & & 0.27652 & 0.26490 & 18.8\\
\hline 23 & & & & & & & 0.26569 & 0.27570 & 13.3\\
\hline 24 & & & & & & & 0.27030 & 0.27100 & 14.0\\
\hline 25 & & & & & & & 0.26637 & 0.27500 & 15.9\\
\hline 26 & & & & & & & 0.26724 & 0.27410 & 17.0\\
\hline 27 & & & & & & & 0.26950 & 0.27180 & 18.6\\
\hline 28 & & & & & & & 0.27383 & 0.26750 & 20.1\\
\hline 29 & & & & & & & 0.27757 & 0.26390 & 21.1\\
\hline 30 & & & & & & & 0.28012 & 0.26150 & 21.6\\
\hline 31 & & & & & & & 0.27424 & 0.26710 & 47.3\\
\hline
\end{tabular}
\end{center}
\end{table}

Table \ref{tab:ratio} presents our new approximate ratios, together with the old approximate ones and the exact ones available in the literature. The first conclusion that we draw from the table is that
the new ratios are very close to the exact ones, as well as to the old ones. This makes
us confident on the accuracy of the ratios for 11-31
nodes, where no exact ratios are available in the literature due to
the high computational cost involved in calculating them. Another
conclusion that we draw from the table is that the ratios seem to be 0.26-0.28 for 11-31 nodes. This agrees well with the conjectured ratio of 0.27 for more than 10 nodes reported in \cite{GillispieandPerlman2002}. A last conclusion that we draw from the table is that the fraction of EGs that represent a unique DAG, i.e. $\frac{\#EDAGs}{\#EGs}$, is 0.26-0.28 for 11-31 nodes, a substantial fraction.

Recall from the previous section that we slightly modified the experimental setting for 31 nodes, namely we doubled the length of the MCs. The reason is as follows. We observed an increasing trend in $\frac{\#EGs}{\#DAGs}$ for 25-30 nodes, and interpreted this as an indication that we might be reaching the limits of our experimental setting. Therefore, we decided to double the length of the MCs for 31 nodes in order to see whether this broke the trend. As can be seen in Table \ref{tab:ratio}, it did. This suggests that approximating the ratio for more than 31 nodes will require larger MCs and/or samples than the ones used in this work.

Note that we can approximate the number of EGs for up to 31 nodes as $\frac{\#EGs}{\#DAGs} \#DAGs$, where $\frac{\#EGs}{\#DAGs}$ comes from Table \ref{tab:ratio} and $\#DAGs$ comes from \cite[Equation 8]{Robinson1977}. Alternatively, we can approximate it as $\frac{\#EGs}{\#EDAGs} \#EDAGs$, where $\frac{\#EGs}{\#EDAGs}$ comes from Table \ref{tab:ratio} and $\#EDAGs$ can be computed by \cite[p. 270]{Steinsky2003}.

Finally, a few words on the running times reported in Table \ref{tab:ratio} may be in place. First, note that the times reported in Table \ref{tab:ratio} for the exact ratios are borrowed from \cite{GillispieandPerlman2002} and, thus, they correspond to a computer program run on a "mid-1990s-era, midrange minicomputer". Therefore, a direct comparison to our times seems unadvisable. Second, our times are around four times faster than the old times. The reason may be in the use of a more powerful computer and/or a different version of the compiler. The reason cannot be in the difference in the computer programs run, since this is negligible. Third, the new times have some oddities, e.g. the time for two nodes is greater than the time for three nodes. The reason may be that the computer ran other programs while running the experiments reported in this paper.

\section{Extension with New Ratios}\label{sec:cegs}

\begin{table}[t]
\caption{Approximate $\frac{\#CEGs}{\#CDAGs}$, $\frac{\#CEGs}{\#EGs}$ and $\frac{\#CDAGs}{\#DAGs}$.}\label{tab:ratio2} \scriptsize
\begin{center}
\renewcommand{\arraystretch}{1.2}
\begin{tabular}{c|rrr|}
\multicolumn{1}{c}{\bf NODES} & \multicolumn{3}{c}{\bf NEW APPROXIMATE}\\
& $\frac{\#CEGs}{\#CDAGs}$ & $\frac{\#CEGs}{\#EGs}$ & $\frac{\#CDAGs}{\#DAGs}$\\
\hline 2 & 0.51482 & 0.50730 & 0.66667\\
\hline 3 & 0.39334 & 0.63350 & 0.72000\\ 
\hline 4 & 0.32295 & 0.78780 & 0.82136\\ 
\hline 5 & 0.29471 & 0.90040 & 0.90263\\ 
\hline 6 & 0.28033 & 0.94530 & 0.95115\\
\hline 7 & 0.27799 & 0.97680 & 0.97605\\
\hline 8 & 0.26688 & 0.98860 & 0.98821\\
\hline 9 & 0.27164 & 0.99560 & 0.99415\\
\hline 10 & 0.26413 & 0.99710 & 0.99708\\
\hline 11 & 0.26170 & 0.99820 & 0.99854\\
\hline 12 & 0.26829 & 0.99940 & 0.99927\\
\hline 13 & 0.27407 & 0.99970 & 0.99964\\
\hline 14 & 0.27163 & 0.99990 & 0.99982\\
\hline 15 & 0.26253 & 1.00000 & 0.99991\\
\hline 16 & 0.26941 & 0.99990 & 0.99995\\
\hline 17 & 0.26942 & 1.00000 & 0.99998\\ 
\hline 18 & 0.27041 & 1.00000 & 0.99999\\
\hline 19 & 0.27130 & 1.00000 & 0.99999\\
\hline 20 & 0.26734 & 1.00000 & 1.00000\\ 
\hline 21 & 0.26463 & 1.00000 & 1.00000\\
\hline 22 & 0.27652 & 1.00000 & 1.00000\\
\hline 23 & 0.26569 & 1.00000 & 1.00000\\
\hline 24 & 0.27030 & 1.00000 & 1.00000\\
\hline 25 & 0.26637 & 1.00000 & 1.00000\\
\hline 26 & 0.26724 & 1.00000 & 1.00000\\
\hline 27 & 0.26950 & 1.00000 & 1.00000\\
\hline 28 & 0.27383 & 1.00000 & 1.00000\\
\hline 29 & 0.27757 & 1.00000 & 1.00000\\
\hline 30 & 0.28012 & 1.00000 & 1.00000\\
\hline 31 & 0.27424 & 1.00000 & 1.00000\\
\hline $\infty$ & ? & ? & $\approx$ 1\\
\hline
\end{tabular}
\end{center}
\end{table}

In \cite[p. 153]{GillispieandPerlman2002}, it is stated that "the variables chosen for inclusion in a multivariate data set are not chosen at random but rather because they occur in a common real-world context, and hence are likely to be correlated to some degree". This implies that the EG learnt from some given data is likely to be connected. We agree with this observation, because we believe that humans are good at detecting sets of mutually uncorrelated variables so that the original learning problem can be divided into smaller independent learning problems, each of which results in a CEG. Therefore, although we still cannot say which EGs will be visited during the search, we can say that some of them will most likely be connected and some others disconnected. This raises the question of whether $\frac{\#CEGs}{\#CDAGs} \approx \frac{\#DEGs}{\#DDAGs}$ where DEGs and DDAGs stand for disconnected EGs and disconnected DAGs. In \cite[p. 154]{GillispieandPerlman2002}, it is also said that a consequence of the learnt EG being connected is "that a substantial number of undirected edges are likely to be present in the representative essential graph, which in turn makes it likely that the corresponding equivalence class size will be relatively large". In other words, they conjecture that the equivalence classes represented by CEGs are relatively large. We interpret the term "relatively large" as having a ratio smaller than $\frac{\#EGs}{\#DAGs}$. However, this conjecture does not seem to hold according to the approximate ratios presented in Table \ref{tab:ratio2}. There, we can see that $\frac{\#CEGs}{\#CDAGs} \approx$ 0.26-0.28 for 6-31 nodes and, thus, $\frac{\#CEGs}{\#CDAGs} \approx \frac{\#EGs}{\#DAGs}$. That the two ratios coincide is not by chance because $\frac{\#CEGs}{\#EGs} \approx$ 0.95-1 for 6-31 nodes, as can be seen in the table. A problem of this ratio being so close to 1 is that sampling a DEG is so unlikely that we cannot answer the question of whether $\frac{\#CEGs}{\#CDAGs} \approx \frac{\#DEGs}{\#DDAGs}$ with our sampling scheme. Therefore, we have to content with having learnt that $\frac{\#CEGs}{\#CDAGs} \approx \frac{\#EGs}{\#DAGs}$. It is worth mentioning that this result is somehow conjectured by Ko\v{c}ka when he states in a personal communication to Gillispie that "large equivalence classes are merely composed of independent classes of smaller sizes that combine to make a single larger class" \cite[p. 1411]{Gillispie2006}. Again, we interpret the term "large" as having a ratio smaller than $\frac{\#EGs}{\#DAGs}$. Again, we cannot check Ko\v{c}ka's conjecture because sampling a DEG is very unlikely. However, we believe that the conjecture holds, because we expect the ratios for those EGs with $k$ connected components to be around $0.27^k$, i.e. we expect the ratios of the components to be almost independent one of another. Gillispie goes on saying that "an equivalence class encountered at any single step of the iterative [learning] process, a step which may involve altering only a small number of edges (typically only one), might be quite small" \cite[p. 1411]{Gillispie2006}. Note that the equivalence classes that he suggests that are quite small must correspond to CEGs, because he suggested before that large equivalence classes correspond to DEGs. We interpret the term "quite small" as having a ratio greater than $\frac{\#EGs}{\#DAGs}$. Again, this conjecture does not seem to hold according to the approximate ratios presented in Table \ref{tab:ratio2}. There, we can see that $\frac{\#CEGs}{\#CDAGs} \approx$ 0.26-0.28 for 6-31 nodes and, thus, $\frac{\#CEGs}{\#CDAGs} \approx \frac{\#EGs}{\#DAGs}$.

From the results in Tables \ref{tab:ratio} and \ref{tab:ratio2}, it seems that the asymptotic values for $\frac{\#EGs}{\#DAGs}$, $\frac{\#EDAGs}{\#EGs}$, $\frac{\#CEGs}{\#CDAGs}$ and $\frac{\#CEGs}{\#EGs}$ should be around 0.27, 0.27, 0.27 and 1, respectively. It would be nice to have a formal proof of these results. In this paper, we have proven a related result, namely that the ratio of CDAGs to DAGs is asymptotically 1. The proof can be found in Appendix B. Note from Table \ref{tab:ratio2} that the asymptotic value is almost achieved for 6-7 nodes already. Our result adds to the list of similar results in the literature, e.g. the ratio of labeled connected graphs to labeled graphs is asymptotically 1 \cite[p. 205]{HararyandPalmer1973}.

Note that we can approximate the number of CEGs for up to 31 nodes as $\frac{\#CEGs}{\#EGs} \#EGs$, where $\frac{\#CEGs}{\#EGs}$ comes from Table \ref{tab:ratio2} and $\#EGs$ can be computed as shown in the previous section. Alternatively, we can approximate it as $\frac{\#CEGs}{\#CDAGs} \#CDAGs$, where $\frac{\#CEGs}{\#CDAGs}$ comes from Table \ref{tab:ratio2} and $\#CDAGs$ can be computed as shown in Appendix A.

Finally, note that the running times to obtain the results in Table \ref{tab:ratio2} are the same as those in Table \ref{tab:ratio}, because both tables are based on the same samples.

\section{Discussion}\label{sec:discussion}

In \cite{GillispieandPerlman2002}, it is shown that $\frac{\#EGs}{\#DAGs} \approx 0.27$ for 7-10 nodes. We have shown in this paper that $\frac{\#EGs}{\#DAGs} \approx$ 0.26-0.28 for 11-31 nodes. These results indicate that one should not expect more than a moderate gain in efficiency when searching the space of EGs instead of the space of DAGs. We have also shown that $\frac{\#CEGs}{\#CDAGs} \approx$ 0.26-0.28 for 6-31 nodes and, thus, $\frac{\#CEGs}{\#CDAGs} \approx \frac{\#EGs}{\#DAGs}$. Therefore, when searching the space of EGs, the fact that some of the EGs visited will most likely be connected does not seem to imply any additional gain in efficiency beyond that due to searching the space of EGs instead of the space of DAGs.

Some questions that remain open and that we would like to address in the future are checking whether $\frac{\#CEGs}{\#CDAGs} \approx \frac{\#DEGs}{\#DDAGs}$, and computing the asymptotic ratios of EGs to DAGs, EDAGs to EGs, CEGs to CDAGs, and of CEGs to EGs. Recall that in this paper we have proven that the asymptotic ratio of CDAGs to DAG is 1. Another topic for further research, already mentioned in \cite{Penna2007}, would be improving the graphical modifications that determine the MC transitions, because they rather often produce a graph that is not an EG. Specifically, the MC transitions are determined by choosing uniformly one out of seven modifications to perform on the current EG. Actually, one of the modifications leaves the current EG unchanged. Therefore, around 14 \% of the modifications cannot change the current EG and, thus, 86 \% of the modifications can change the current EG. In our experiments, however, only 6-8 \% of the modifications change the current EG. The rest up to the mentioned 86 \% produce a graph that is not an EG and, thus, they leave the current EG unchanged. This problem has been previously pointed out in \cite{Perlman2000}. Furthermore, he presents a set of more complex modifications that are claimed to alleviate the problem just described. Unfortunately, no evidence supporting this claim is provided. More recently, He et al. have proposed an alternative set of modifications having a series of desirable features that ensure that applying the modifications to an EG results in a different EG \cite{Heetal.2012}. Although these modifications are more complex than those in \cite{Penna2007}, the authors show that their MCMC approach is thousands of times faster for 3, 4 and 6 nodes \cite[pp. 17-18]{Heetal.2012}. However, they also mention that it is unfair to compare these two approaches: Whereas $10^4$ MCs of $10^6$ transitions each are run in \cite{Penna2007} to obtain a sample, they only run one MC of $10^4$-$10^5$ transitions. Therefore, it is not clear how their MCMC approach scales to 10-30 nodes as compared to the one in \cite{Penna2007}. The point of developing modifications that are more effective than ours at producing EGs is to make a better use of the running time by minimizing the number of graphs that have to be discarded. However, this improvement in effectiveness has to be weighed against the computational cost of the modifications, so that the MCMC approach still scales to the number of nodes of interest.

\section*{Appendix A: Counting CDAGs}

Let $A(x)$ denote the exponential generating function for DAGs. That is,
\[
A(x)=\sum_{k=1}^{\infty} \frac{A_k}{k!} x^k
\]
where $A_k$ denotes the number of DAGs of order $k$. Likewise, let $a(x)$ denote the exponential generating function for CDAGs. That is,
\[
a(x)=\sum_{k=1}^{\infty} \frac{a_k}{k!} x^k
\]
where $a_k$ denotes the number of CDAGs of order $k$. Note that $A_k$ can be computed without having to resort to enumeration by \cite[Equation 8]{Robinson1977}. However, we do not know of any formula to compute $a_k$ without enumeration. Luckily, $a_k$ can be computed from $A_k$ as follows. First, note that
\[
1+A(x)=e^{a(x)}
\]
as shown by \cite[pp. 8-9]{HararyandPalmer1973}. Now, let us define $A_0=1$ and redefine $A(x)$ as
\[
A(x)=\sum_{k=0}^{\infty} \frac{A_k}{k!} x^k,
\]
i.e. the summation starts with $k=0$. Then,
\[
A(x)=e^{a(x)}.
\]
Consequently,
\[
\frac{a_n}{n!}=\frac{A_n}{n!}-(\sum_{k=1}^{n-1} k \frac{a_k}{k!} \frac{A_{n-k}}{(n-k)!}) / n
\]
as shown by \cite[pp. 8-9]{HararyandPalmer1973}, and thus
\[
a_n=A_n-(\sum_{k=1}^{n-1} k \binom{n}{k} a_k A_{n-k}) / n.
\]
See also \cite[pp. 38-39]{Castelo2002}. Moreover, according to \cite[Sequence A082402]{oeis2010}, the result in this appendix has previously been reported in \cite{Robinson1973}. However, we could not gain access to that paper to confirm it.

\section*{Appendix B: Asymptotic Behavior of CDAGs}

\begin{thm}\label{the:ratio}
The ratio of CDAGs of order $n$ to DAGs of order $n$ tends to 1 as $n$ tends to infinity.
\end{thm}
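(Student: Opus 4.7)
The plan is to exploit the combinatorial identity derived in Appendix A. Using the equality $\tfrac{k}{n}\binom{n}{k}=\binom{n-1}{k-1}$, the formula there rewrites as
\[
A_n - a_n = \sum_{k=1}^{n-1} \binom{n-1}{k-1}\, a_k\, A_{n-k},
\]
so it suffices to prove that the right-hand side is $o(A_n)$ as $n\to\infty$. Since $a_k \le A_k$, I can replace $a_k$ by $A_k$ and reduce the problem to bounding
\[
\frac{A_n - a_n}{A_n} \le \sum_{k=1}^{n-1} \binom{n-1}{k-1}\,\frac{A_k\, A_{n-k}}{A_n}.
\]

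The key step is a clean lower bound on $A_n$, namely $A_n \ge 2^{k(n-k)}\, A_k\, A_{n-k}$ for every $1 \le k \le n-1$. To show this, I would fix any partition $[n]=S\sqcup T$ with $|S|=k$ and define a map sending a triple $(D_S, D_T, E)$ -- where $D_S$ is a DAG on $S$, $D_T$ is a DAG on $T$, and $E \subseteq S \times T$ is an arbitrary set of directed edges from $S$ to $T$ -- to the graph $D_S \cup D_T \cup E$ on $[n]$. This graph is acyclic: any directed cycle would have to cross the partition both ways, but no edges go from $T$ to $S$. The map is clearly injective (the triple is recovered by restriction to $S$, to $T$, and to the crossing edges). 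Counting gives $A_k A_{n-k} 2^{k(n-k)}$ distinct DAGs on $[n]$, yielding the bound.

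Plugging in, each summand satisfies $\binom{n-1}{k-1} A_k A_{n-k}/A_n \le \binom{n-1}{k-1}/2^{k(n-k)}$. For the boundary terms $k=1$ and $k=n-1$ this is at most $(n-1)/2^{n-1}$, while for $2 \le k \le n-2$ we have $k(n-k) \ge 2(n-2)$ and $\binom{n-1}{k-1} \le 2^{n-1}$, so each summand is at most $2^{-(n-3)}$. Summing gives $(A_n-a_n)/A_n = O(n/2^n) \to 0$, which completes the proof.

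The main obstacle is the injection step: one must be careful to orient all crossing edges in the same direction so that acyclicity is preserved for free. Once that inequality is in hand, the rest is a routine estimate, and the very fast decay $O(n/2^n)$ is consistent with Table~\ref{tab:ratio2}, where the ratio is already indistinguishable from $1$ at $n \approx 20$.
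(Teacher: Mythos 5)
Your proof is correct, and it takes a genuinely different route from the paper's. The paper proves the theorem by invoking Wright's theorem relating the sequences $a_n/n!$ and $A_n/n!$, and then verifies Wright's three conditions (growth of $\log((A_n/n!)/(A_{n-1}/(n-1)!))$, its eventual monotonicity, and convergence of $\sum_k (A_k/k!)^2/(A_{2k}/(2k)!)$) via separate injective constructions of DAGs. You instead argue directly from the recurrence of Appendix A: rewriting it as $A_n-a_n=\sum_{k=1}^{n-1}\binom{n-1}{k-1}a_kA_{n-k}$ is legitimate (it is the standard decomposition of a DAG by the connected component of a fixed vertex), the bound $a_k\le A_k$ is trivial, and your key inequality $A_n\ge 2^{k(n-k)}A_kA_{n-k}$ is sound: fixing the bipartition and orienting all crossing edges from $S$ to $T$ guarantees acyclicity, and restriction recovers the triple, so the map is injective. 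Your final estimates also check out: the terms $k=1,n-1$ contribute at most $n/2^{n-1}$, and for $2\le k\le n-2$ one has $k(n-k)\ge 2(n-2)$ and $\binom{n-1}{k-1}\le 2^{n-1}$, giving $(A_n-a_n)/A_n=O(n/2^n)$. Compared with the paper, your argument is self-contained (no appeal to Wright's result), recycles the identity already derived in Appendix A, and yields an explicit exponential rate of convergence rather than bare convergence; note also that at $n=2k$ your lemma gives $A_{2k}\ge 2^{k^2}A_k^2$, considerably stronger than the bound $A_{2k}\ge k^2\binom{2k-2}{k-1}A_k^2$ the paper uses for Wright's third condition. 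What the paper's route buys is brevity modulo the cited theorem: once Wright's criterion is accepted, only comparatively crude lower bounds on ratios of the $A_n$ are needed, whereas your approach hinges on having the strong product inequality available (which, fortunately for DAGs, is easy).
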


\begin{proof}
Let $A_n$ and $a_n$ denote the numbers of DAGs and CDAGs of order $n$, respectively. Specifically, we prove that $(A_n/n!)/(a_n/n!) \rightarrow 1$ as $n \rightarrow \infty$. By \cite[Theorem 6]{Wright1967}, this holds if the following three conditions are met:
\begin{itemize}
\item[(i)] $\log ((A_n/n!)/(A_{n-1}/(n-1)!)) \rightarrow \infty$ as $n \rightarrow \infty$,

\item[(ii)] $\log ((A_{n+1}/(n+1)!)/(A_n/n!)) \geq \log ((A_n/n!)/(A_{n-1}/(n-1)!))$ for all large enough $n$, and

\item[(iii)] $\sum_{k=1}^{\infty} (A_k/k!)^2/(A_{2k}/(2k)!)$ converges.
\end{itemize}

We start by proving that the condition (i) is met. Note that from every DAG $G$ over the nodes $\{v_1, \ldots, v_{n-1}\}$ we can construct $2^{n-1}$ different DAGs $H$ over $\{v_1, \ldots, v_n\}$ as follows: Copy all the arrows from $G$ to $H$ and make $v_n$ a child in $H$ of each of the $2^{n-1}$ subsets of $\{v_1, \ldots, v_{n-1}\}$. Therefore,
\[
\log ((A_n/n!)/(A_{n-1}/(n-1)!)) \geq \log (2^{n-1}/n)
\]
which clearly tends to infinity as $n$ tends to infinity.

We continue by proving that the condition (ii) is met. Every DAG over the nodes $V \cup \{w\}$ can be constructed from a DAG $G$ over $V$ by adding the node $w$ to $G$ and making it a child of a subset $Pa$ of $V$. If a DAG can be so constructed from several DAGs, we simply consider it as constructed from one of them. Let $H_1, \ldots, H_m$ represent all the DAGs so constructed from $G$. Moreover, let $Pa_i$ denote the subset of $V$ used to construct $H_i$ from $G$. From each $Pa_i$, we can now construct $2m$ DAGs over $V \cup \{w, u\}$ as follows: (i) Add the node $u$ to $H_i$ and make it a child of each subset $Pa_j \cup \{w\}$ with $1 \leq j \leq m$, and (ii) add the node $u$ to $H_i$ and make it a parent of each subset $Pa_j \cup \{w\}$ with $1 \leq j \leq m$. Therefore, $A_{n+1}/A_n \geq 2 A_n/A_{n-1}$ and thus
\[
\log ((A_{n+1}/(n+1)!)/(A_n/n!)) = \log (A_{n+1}/A_n) - \log (n+1)
\]
\[
\geq \log (2 A_n/A_{n-1}) - \log (n+1)
\geq \log (2 A_n/A_{n-1}) - \log (2 n) = \log (A_n/A_{n-1}) - \log n 
\]
\[
= \log ((A_n/n!)/(A_{n-1}/(n-1)!)).
\]

Finally, we prove that the condition (iii) is met. Let $G$ and $G'$ denote two (not necessarily distinct) DAGs of order $k$. Let $V=\{v_1, \ldots, v_k\}$ and $V'=\{v'_1, \ldots, v'_k\}$ denote the nodes in $G$ and $G'$, respectively. Consider the DAG $H$ over $V \cup V'$ that has the union of the arrows in $G$ and $G'$. Let $w$ and $w'$ denote two nodes in $V$ and $V'$, respectively. Let $S$ be a subset of size $k-1$ of $V \cup V' \setminus \{w, w'\}$. Now, make $w$ a parent in $H$ of all the nodes in $S \cap V'$, and make $w'$ a child in $H$ of all the nodes in $S \cap V$. Note that the resulting $H$ is a DAG of order $2k$. Note that there are $k^2$ different pairs of nodes $w$ and $w'$. Note that there are $\binom{2k-2}{k-1}$ different subsets of size $k-1$ of $V \cup V' \setminus \{w, w'\}$. Note that every choice of DAGs $G$ and $G'$, nodes $w$ and $w'$, and subset $S$ gives rise to a different DAG $H$. Therefore, $A_{2k}/A_k^2 \geq k^2 \binom{2k-2}{k-1}$ and thus
\[
\sum_{k=1}^{\infty} (A_k/k!)^2/(A_{2k}/(2k)!)
= \sum_{k=1}^{\infty} A_k^2 (2k)!/(A_{2k} k!^2)
\]
\[
\leq \sum_{k=1}^{\infty} ((k-1)! (k-1)! (2k)!)/(k^2 (2k-2)! k!^2)
= \sum_{k=1}^{\infty} (4k-2)/k^3
\]
which clearly converges.
\end{proof}

\subsubsection*{Acknowledgments.}

This work is funded by the Center for Industrial Information Technology (CENIIT) and a so-called career contract at Link\"oping University, by the Swedish Research Council (ref. 2010-4808), and by FEDER funds and the Spanish Government (MICINN) through the project TIN2010-20900-C04-03. We thank Dag Sonntag for his comments on this work.


\begin{thebibliography}{4}

\bibitem{Castelo2002}
Castelo, R. {\em The Discrete Acyclic Digraph Markov Model in Data Mining}. PhD Thesis, Utrecht University (2002).

\bibitem{Gillispie2006}
Gillispie, S. B. Formulas for Counting Acyclic Digraph Markov Equivalence Classes. {\em Journal of Statistical Planning and Inference} (2006) 1410-1432.

\bibitem{GillispieandPerlman2002}
Gillispie, S. B. and Perlman, M. D. The Size Distribution for
Markov Equivalence Classes of Acyclic Digraph Models. {\em Artificial
Intelligence} (2002) 137-155.

\bibitem{HararyandPalmer1973}
Harary, F. and Palmer, E. M. {\em Graphical Enumeration}. Academic Press (1973).

\bibitem{Heetal.2012}
He, Y., Jia, J. and Yu, B. Reversible MCMC on Markov Equivalence Classes of Sparse Directed Acyclic Graphs. arXiv:1209.5860v2 [stat.ML].

\bibitem{Penna2007}
Pe\~na, J. M. Approximate Counting of Graphical Models Via MCMC. In {\em Proceedings of the Eleventh International Conference on Artificial Intelligence and Statistics} (2007) 352-359.

\bibitem{Perlman2000}
Perlman, M. D. {\em Graphical Model Search Via Essential Graphs}. Technical Report 367, University of Washington (2000).

\bibitem{Robinson1973}
Robinson, R. W. Counting Labeled Acyclic Digraphs. In {\em New Directions in the Theory of Graphs} (1973) 239-273.

\bibitem{Robinson1977}
Robinson, R. W. Counting Unlabeled Acyclic Digraphs. In {\em Proceedings
of the Fifth Australian Conference on Combinatorial Mathematics} (1977) 28-43.

\bibitem{Steinsky2003}
Steinsky, B. Enumeration of Labelled Chain Graphs and Labelled
Essential Directed Acyclic Graphs. {\em Discrete Mathematics} (2003)
266-277.

\bibitem{Steinsky2004}
Steinsky, B. Asymptotic Behaviour of the Number of Labelled Essential Acyclic Digraphs and Labelled Chain Graphs. {\em Graphs and Combinatorics} (2004) 399-411.

\bibitem{oeis2010}
The On-Line Encyclopedia of Integer Sequences, published electronically at http://oeis.org, (2010). 

\bibitem{Wright1967}
Wright, E. M. A Relationship between Two Sequences. In {\em Proceedings of the London Mathematical Society} (1967) 296-304.

\end{thebibliography}
\end{document}